\title{The complexity of unsupervised learning of lexicographic preferences}
\documentclass{article}\pdfpagewidth=8.5in\pdfpageheight=11in
\usepackage{ijcai22,times}


\author{
Hélène Fargier$^1$
\and
Pierre-Fran\c cois Gimenez$^2$\and
Jérôme Mengin$^3$\And
Ngoc Bao Nguyen$^4$
\affiliations
$^{1,3}$IRIT, Université de Toulouse, CNRS, Toulouse INP, UT3, Toulouse, France\\
$^2$CentraleSupélec, Univ. Rennes, IRISA\\
$^4$INSA Toulouse\\
\emails
\{helene.fargier,jerome.mengin\}@irit.fr,
pierre-francois.gimenez@centralesupelec.fr,
nbnguyen@etud.insa-toulouse.fr
}

\usepackage{tikz}\usetikzlibrary{arrows,shapes,positioning,topaths}

\usepackage{ifoption,maclasse-unicodes,maclasse-maths}
\usepackage{natbib}\bibliographystyle{plainnat}
\usepackage{amsmath,stmaryrd,mathtools}

\usepackage[appendix=strip,bibliography=common]{apxproof}
\newtheorem{theorem}{Theorem}
\newtheoremrep{proposition}[theorem]{Proposition} 	
 \newtheorem{corollary}[theorem]{Corollary}  \theoremstyle{definition} \newtheorem{definition}{Definition} \newtheorem{example}{Example}

\newcommand\compactmath{\thinmuskip0mu\medmuskip0mu\thickmuskip0mu\arraycolsep1pt}

\usepackage{enumitem,algorithm}
\newlist{steps}{enumerate}4
\setlist[steps,1]{label=\arabic*.,nosep,leftmargin=2em}
\setlist[steps,2]{label=\alph{*}.,nosep,leftmargin=*}
\setlist[steps,3]{label=\roman{*}.,nosep,leftmargin=*}
\setlist[steps,4]{label=\Roman*.,nosep,leftmargin=*}
\newcommand\sousitem{\par\hangindent 40\p@ \hspace*{20\p@}}
\newcommand\noitem{\item[]\hskip-\leftmargin}
\usepackage{comment}

\newif\iftc \tcfalse%
\makeatletter\if@twocolumn\tctrue\fi\makeatother

\newcommand\pfg[1]{}

\DeclareMathSymbol{:}{\mathop}{operators}{"3A}

\let\dom\underline
\newcommand\domX{\dom{\cal X}}

\newcommand\erank{\overline{§rank§}}

\newcommand{\probS}{p_{\cal S }}

\DeclareUnicodeCharacter{0306}{\mathaccent"015\relax}
\DeclareUnicodeCharacter{227B}{\succ}

\begin{document}

\maketitle

\begin{abstract}
This paper considers the task of learning users' preferences on a combinatorial set of alternatives, as generally used by online configurators, for example. In many settings, only a set of selected alternatives during past interactions is available to the learner.
\cite{FargierGimenezMengin:aaai18} propose an approach to learn, in such a setting, a model of the users' preferences that ranks previously chosen alternatives as high as possible; and an algorithm to learn, in this setting, a particular model of preferences: lexicographic preferences trees (LP-trees).
In this paper, we study complexity-theoretical problems related to this approach. We give an upper bound on the sample complexity of learning an LP-tree, which is logarithmic in the number of attributes. We also prove that computing the LP tree that minimises the empirical risk can be done in polynomial time when restricted to the class of linear LP-trees.
\end{abstract}

\section{Introduction}

Modern, interactive decision support systems like recommender
systems or configurators often handle a very
large set of possible decisions/alternatives. The task
of finding the  alternatives that best suit their
preferences can be challenging for users, but the
system can guide them towards their optimal decision
if it has some knowledge of their preferences. In many
settings, the users'
preferences   are not  known  in
 advance. This is especially the case with systems that enable
 anonymous users to browse the catalogues: such systems must
be able to acquire users' preferences.

Preference learning has emerged as an important field; many interesting results are reported in, e.g., the book edited by~\cite{FurnkranzHullermeier:book11}, or the proceedings of recent Preference Learning or DA2PL (Decision Aid to Preference Learning) workshops. A general problem is: given a set of observed preferences, induce a model of preferences that best explains these observations within a certain class of models. As input, it is often assumed that the observed preferences are given as a set of pairwise comparisons   or partial rankings of alternatives  \citep{Joachims:kdd02};
or can be elicitated online by asking the user to choose between two alternatives \citep{ViappianiAlJAIR06,KoricheZ09}.

But in some circumstances, such input is not available. This is especially the case with some anonymous on-line configurators, where little information is stored about interactions.
However, e-commerce companies generally keep a history of past sales. Users have chosen sold items, so they are probably ranked high in their preferences, but not necessarily at the very top. Indeed, a user may eventually choose an item that is not the optimal one in her preference order. For instance, because of the difficulty in grasping all possible options, a phenomenon called ``mass confusion'' \citep{huffman1998variety}, because of the influence of an advertisement, or because her preferred item is unavailable. Yet, this list of highly ranked items does provide information about the users' preferences.

\cite{FargierGimenezMengin:aaai18} propose a model for learning preferences in such settings: the learning algorithm receives a multiset of alternatives that past users have chosen and induces a ranking of the alternatives to guide future users in their exploration of the set of possible alternatives.
If, for instance, the colour red appears more often in the sales history than the colour yellow, then we want to induce a model that ranks alternatives with the colour red higher than alternatives with the colour yellow -- maybe in association with some other criteria. This is similar to the usual, unsupervised setting when learning Bayesian networks~\citep[see e.g.][]{Neapolitan:book03}. This unsupervised setting is convenient from a machine learning point of view, as data is usually easy to obtain and does not necessitate any input labeling. The alternatives that users in past sessions have chosen are called ``positive examples'' in \citep{FargierGimenezMengin:aaai18}, as opposed to possible ``negative examples'' that one could have in a setting where we would also have information about alternatives \emph{rejected} by past users.

Research on the representation and learning of preferences has brought forward several types of models. Numerical models, like linear ranking functions or additive utilities \citep{Joachims:kdd02,Freundetal:jmlr03,ScFaVe1995.1,GonzalesPerny04,Braziunas05localutility}, are rich families of models, especially if one allows high-dimensional feature spaces.
Research in Artificial Intelligence has also brought forward ordinal models, like
CP-nets~\citep{Boutilier04cp-nets} and several extensions or variants. 
 Lexicographic preferences are another family of ordinal models. This kind of preference is based on the importance of the attributes: when comparing two outcomes, their values for the most important attribute are compared; if the two outcomes have different values for that attribute, then the one with the preferred value is deemed preferable to the other; otherwise  one looks at the next most important attribute, and so on.
This model can be extended by allowing the preferences on the values of an issue  to depend on the values of more important ones. The relative importance of issues is no longer a linear order, but a  ``lexicographic preference'' tree \citep{fraser:ieeeconf93,fraser:theo-dec94,Wilson:ecai06,WallaceW:ann-OR09,Boothetal:ecai10}. 

Lexicographic preference trees have several advantages over other preference representation models. First, this is an ordinal model, which is sufficient to represent a ranking of alternatives. Furthermore,
they are generally an accurate representation of human behaviours \citep{GigerenzerG:psych-reviewG96}. Finally, one can quickly (in polytime) perform some interesting requests for recommendations, such as finding an optimal object or an optimal value for some attribute \citep{FargierMengin:aamas21}.

Learning lexicographic preference models with pairwise comparisons as inputs has been studied by e.g.~\cite{SchmittMartignon:jmlr06,Dombietal:ejor07,Yamanetal:icml08}, while \cite{Boothetal:ecai10,BrauningHullermeier:pl12,Brauningetal:ejor17,LiuTruszczynski:aaai15} studied learning of  lexicographic preference trees. More recently, \cite{FargierGimenezMengin:aaai18} have proposed a greedy algorithm for learning lexicographic preference trees from sales history. They reported some promising results on experimentation on both synthetic data and an industrial dataset from a car manufacturer (Renault), using clustering in a pre-processing step and a pruning pass in a post-processing step.

This paper proposes a complexity theoretical analysis of the approach proposed by \cite{FargierGimenezMengin:aaai18}. We derive an upper bound on the sample complexity of computing the optimal LP-tree w.r.t. a given sample of chosen alternatives. This complexity is, in particular, logarithmic in the number of attributes. We also prove that computing this optimal LP-tree can be done in polynomial time when restricted to the class of ``linear'' LP-trees (which correspond to usual lexicographic preferences). Finally, we propose an algorithm for computing it for the more expressive class of LP lists \citep{Brauningetal:ejor17}, where several attributes can be at the same importance level.



The paper is structured as follows. The next section recall some background on combinatorial domains, LP-trees and the learning model introduced by \cite{FargierGimenezMengin:aaai18}. In section~\ref{sect:rank} we derive some results on the computation of the rank of alternatives w.r.t some LP-tree. The following three sections are devoted to three classes of LP-trees, in order of increasing generality.

\section{Background and notations}

\subsection{Combinatorial Domain}

We consider a combinatorial domain over a finite set $ \cal X $ discrete attributes that characterise the possible alternatives, each attribute $ X ∈ \cal X  $ having a finite set of possible values $ \dom X $; we assume that $ | \dom X | ≥ 2 $ for every $ X ∈ \cal X $; then $ \dom{\cal X } $ denotes the Cartesian product of the domains of the attributes in $ \cal X  $, its elements are called alternatives; we often use the symbols $ o $, $ o' $, $ o₁ $, $ o₂ $, … to denote alternatives.
In the sequel, $ n $ is the number of attributes in $ \cal X $, and $ d $ is a bound on the size of the domains of the attributes: for every $ X ∈ \cal X  $, $ \card{\dom X} ≤ d $.

For a subset $ U $ of $ \cal X  $, we will denote by $ \dom U $ the cartesian product of the domains of the attributes in $ U $, every $ u ∈ \dom U $ is an instantiation of $ U $, or partial instantiation (of $ \cal X  $). If $ v $ is an instantiation of some $ V ⊆ \cal X  $, $ v[U] $ denotes the restriction of $ v $ to the attributes in $ V \cap U $; we say that instantiation $ u ∈ \dom U $ and $ v $ are compatible if $ v[U∩V] = u[U∩V] $; if $ U ⊆ V $ and $ v[U] = u $, we say that $ v $ extends $ u $.

Given a partial instantiation $ u $, $ §Var§(u) $ denotes the set of attributes, the values of which appear in $ u $.

\subsection{Preference relation}


In this paper, a preference relation is a linear order over  $ \domX $, that is, a total, transitive, irreflexive binary relation over $ \domX $, often denoted with curly symbol $ ≻ $. For alternatives $ o,o' ∈ \domX $, $ o ≻ o' $ indicates that $ o $ is strictly more preferred to $ o' $. 

Because we consider linear orders over $ \domX $, we can define the \emph{rank} of $o ∈ \domX $ w.r.t. $ ≻ $: $ §rank§(≻,o) = 1 + $ the number of outcomes strictly preferred to $ o $, so that the most preferred outcome has rank 1, the least preferred has rank $ \card{\domX} $:
$$
  §rank§(\succ,o) = 1+\card{\{o' \in \domX \mid o' \succ o\}}.
$$

\subsection{Learning model}

We consider an  unknown target linear order $ ̆≻ $ over $ \domX $, representing the preferences of a decision maker, or of a group of decision makers: whenever these decision makers have to make a decision, they choose an alternative according to $  ̆≻ $; these decisions are not always the ``top'' alternative, maybe because of the difficulty of finding it, or because of some context making it not available for instance. However, we consider that there is some probability distribution $p$ of drawing alternatives of $ \domX $, unknown but supposed to be decreasing w.r.t. to $ §rank§( ̆≻,⋅) $:
if $o > o’$, then $p(o) ≥ p(o’)$ and $rank(o) < rank(o’)$.


We want to learn a (representation of) a linear order $ ≻ $ that is as close as possible to $  ̆≻ $, that can be used to give good answers to queries about $  ̆≻ $ (for instance: “what is the optimal outcome?”, or “is $ o $ preferred to $ o' $?”)

In order to have a relevant measure of how close $ ≻ $ is to $ ̆≻$ , \cite{FargierGimenezMengin:aaai18} introduce the notion of \emph{ranking loss}, defined as the normalised difference between the expected ranks of the two relations according to the ground probability $ p  $:
\begin{align}
  & §rloss§_p(≻, ̆≻)\nonumber
  \\&\qquad  \label{eq:rloss=exp-rank-diff}
    = \frac 1 {\card{\domX}} \left( E_p[§rank§( ≻,⋅)] - E_p[§rank§( ̆≻,⋅)] \right)
\\  &\qquad \label{eq:rloss=weighted-spearman}
    = \frac 1 {\card{\domX}}  \displaystyle  \sum_{o ∈ \domX }p(o) \big(§rank§(≻, o)- §rank§( ̆≻, o) \big)
\end{align}

The aim of a learning process in this setting is to find, given an unknown target $ ̆≻ $ and an associated, unknown, probability distribution $ p $, a linear order $ ≻ $ that minimises this ranking loss.
Eq.~\ref{eq:rloss=exp-rank-diff} indicates that this is equivalent to finding a linear order that has minimal expected rank.

Eq.~\ref{eq:rloss=weighted-spearman} shows that this loss bears some similarity with the Spearman distance between linear orders; here however, the contribution of the rank difference for each alternative $ o $ is weighted with its probability of being drawn. This is because we want to learn a model that orders more accurately preferred alternatives; it is less significant to make mistakes with the ordering of less preferred alternatives. Also, there will not be much information about the alternatives in the tail of the distribution, which are the less preferred one.

\begin{proposition}[\citealt{FargierGimenezMengin:aaai18}]\label{rloss-pos}
Let $ ̆≻ $ and $ ≻ $ be two linear orders over $ \domX $ and $p $ a probability distribution decreasing w.r.t. $ §rank§( ̆≻, ⋅) $. Then $ 0  ≤ §rloss§_p(≻,  ̆≻) < 1 $. Furthermore, if $p $ is \emph{strictly} decreasing w.r.t. $§rank§( ̆≻, ⋅)$, then $ §rloss§_p(≻,  ̆≻) = 0 $ iff $̆ ≻ = ≻  $.
\end{proposition}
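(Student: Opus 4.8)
The plan is to reorganise each expected rank as a sum over unordered \emph{pairs} of alternatives, which turns the ranking loss into a manifestly nonnegative quantity and makes both the bounds and the equality case transparent. Write $N = |\domX|$. Since $\mathrm{rank}(\succ,o) = 1 + |\{o' : o' \succ o\}|$ and $\sum_o p(o) = 1$, summing against $p$ gives
$$E_p[\mathrm{rank}(\succ,\cdot)] = 1 + \sum_o p(o)\,|\{o' : o'\succ o\}| = 1 + \sum_{\{a,b\}} p(\ell_\succ(a,b)),$$
where the last sum ranges over unordered pairs of distinct alternatives and $\ell_\succ(a,b)$ denotes the $\succ$-less-preferred of $a,b$; the reindexing holds because each unordered pair $\{a,b\}$ contributes its loser's probability exactly once. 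The identical formula holds for $\breve{\succ}$, so upon subtracting, every pair on which $\succ$ and $\breve{\succ}$ agree cancels and one is left with
$$\rloss_p(\succ,\breve{\succ}) = \frac{1}{N}\sum_{\{a,b\}\,:\,\text{disagree}} \big(p(a)-p(b)\big),$$
where in each disagreeing pair the labelling is chosen so that $a\,\breve{\succ}\,b$.

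Nonnegativity is then immediate: $p$ decreasing w.r.t. $\mathrm{rank}(\breve{\succ},\cdot)$ means $a\,\breve{\succ}\,b \Rightarrow p(a)\ge p(b)$, so every term $p(a)-p(b)$ is $\ge 0$. For the strict upper bound I would not even need the pairwise form but argue directly from Eq.~(\ref{eq:rloss=weighted-spearman}): each rank lies in $\{1,\dots,N\}$, hence $\mathrm{rank}(\succ,o)-\mathrm{rank}(\breve{\succ},o)\le N-1$ for every $o$, and averaging against $p$ (which sums to $1$) yields $\rloss_p \le (N-1)/N < 1$.

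For the equality statement under the strict-decreasing hypothesis, the direction $\succ=\breve{\succ}\Rightarrow\rloss_p=0$ is clear since every rank difference vanishes. Conversely, suppose $\rloss_p(\succ,\breve{\succ})=0$. Strict decrease gives $a\,\breve{\succ}\,b \Rightarrow p(a) > p(b)$, so in the pairwise formula \emph{each} term indexed by a disagreeing pair is strictly positive; a sum of strictly positive terms that equals zero must be empty, so there are no disagreeing pairs, i.e. $\succ$ and $\breve{\succ}$ order every pair identically, whence $\succ=\breve{\succ}$.

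The only genuinely delicate step is the pairwise re-indexing of the expected rank, namely verifying that $\sum_o p(o)\,|\{o' : o'\succ o\}|$ reorganises into exactly one contribution $p(\ell_\succ(a,b))$ per unordered pair; this is really a weighted rearrangement-inequality bookkeeping, and once it is in place all three claims follow with no further estimation.
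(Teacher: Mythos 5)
Your proof is correct, but there is nothing in this paper to compare it against: Proposition~\ref{rloss-pos} is imported from \citet{FargierGimenezMengin:aaai18} and stated here without proof, so your argument can only be judged on its own merits. Judged that way, it holds up. The re-indexing step you flag as the delicate one is sound: expanding $E_p[\mathrm{rank}(\succ,\cdot)]-1$ as a sum of $p(o)$ over ordered pairs $(o',o)$ with $o'\succ o$ and grouping by unordered pairs, each pair $\{a,b\}$ contributes the probability of its $\succ$-loser exactly once, because a linear order selects exactly one orientation of every pair. The difference of the two expectations then collapses to a sum of $p(a)-p(b)$ over disagreeing pairs labelled so that $a\mathrel{\breve{\succ}}b$, and all three claims follow as you state: nonnegativity from $p(a)\geq p(b)$, the bound from the crude per-alternative estimate $\mathrm{rank}(\succ,o)-\mathrm{rank}(\breve{\succ},o)\leq\card{\domX}-1$ giving $(\card{\domX}-1)/\card{\domX}<1$, and the equality case because under strict decrease a vanishing sum of strictly positive terms must be empty. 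For comparison, the most standard alternative route is to invoke the rearrangement inequality directly: among all linear orders, the expected rank under $p$ is minimised exactly by orders that enumerate alternatives in non-increasing order of $p$, and $\breve{\succ}$ is such an order by hypothesis. That gives nonnegativity in one line, but the ``iff'' then needs a separate uniqueness-of-minimiser argument under strict decrease; your pairwise decomposition delivers the inequality and the equality case from the same formula, which is a fair trade for the bookkeeping you verified.
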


The target preference is unknown, and we try to learn it from some data. In our context, a sample $ \cal S  $ is a multiset of alternatives, that may be, in the case of e-commerce for instance, a list of items that have been chosen by users of the system. Thus, $ \cal S  $ is considered to be representative of the preference expressed with the target linear order: every alternative can have several occurrences in $ \cal S  $, and the higher the rank of an alternative $ o $ in $  ̆≻ $ is, the more likely it is to find $ o $ in $ \cal S  $, and the higher $ p(o) $ is, where $ p $ is the unknown, ground probability, supposed to be decreasing w.r.t. to $ §rank§( ̆≻,⋅)$. In other words, alternatives in $ \cal S $ are supposed to be drawn with replacement from $ \cal X $ according to $p$.

Since the target preference is unknown, we cannot measure the ranking loss of an induced preference. However, since minimising the ranking loss of the induced preference amounts to minimising the sum of the ranks of the outcomes weighted by their probabilities of being drawn, we aim to minimise it by minimising the empirical mean rank of the training sample $ \cal S  $.
For alternative $ o ∈ \domX $, $ m(S,o) $ denotes the multiplicity of alternative $ o $ in $ \cal S $, that is, the number of occurrences of $ o $ in $ \cal S $. We can define $ \probS $ to be the  empirical probability distribution over $ \domX $ such that $  \probS(o) = m(\cal S,o)/\card{\cal S } $ (in particular, $  \probS(o) = 0 $ if $ o ∉ \cal S  $). Then the empirical mean rank is defined as follows:
$$
	\erank(≻,\cal S ) = 
	   \sum_{o \in \cal S } \probS(o) × §rank§(≻,o)
	   = E_{\probS}[§rank§(≻,⋅)] 
$$


In the sequel, given $ \cal S  $, $ ≻^∗ $ denotes a linear order that minimises, often within a given class $ \cal C  $ of linear orders that will be clear from the context, the empirical mean rank : $  ≻^∗ ∈ \cal C  $ and for every $ ≻ ∈ \cal C  $, $ \erank(≻^*,\cal S ) ≤ \erank(≻,\cal S ) $. Or, equivalently:
$$
  ≻^∗ = §argmin§_{≻ ∈ \cal C }	\erank(≻,\cal S )
$$

From a complexity point of view, two questions arise :
\begin{enumerate}
\item How many examples in $ \cal S $ guarantee that $ ≻^∗ $ is probably a good approximation of $ ̆≻ $ ?
\item What is the time complexity of computing a representation of $ ≻^∗ $ ?
\end{enumerate}

The first question can be made more specific in the PAC setting: given a target linear order $  ̆≻ $; a sample of outcomes $ \cal S  $ drawn from $ \domX $ according to some probability distribution $ p $ supposed to be decreasing with $ §rank§( ̆≻,⋅) $; a class $ \cal C  $ of linear orders; and two real numbers $ 0 < δ,  ε < 1 $, what is the minimal function $ S(\cal C ,n,δ,ε) $ such that
$$
  \text{if } |\cal S | > S(\cal C ,n,δ,ε) \text{ then } Pr(§rloss§_p(≻^∗, ̆≻) ≤ ε) ≥ 1 -δ
$$
The function $ S(\cal C ,⋅,⋅,⋅) $ is called the \emph{sample complexity} of learning a linear order in the class $ \cal C  $ from ``positive'' examples (as opposed to learning from a set of pairwise comparisons). Parameter ε is the approximation that is wanted, here a bound on the ranking loss, and δ specifies the probability with which with we want to attain this approximation.

\subsection{Lexicographic preference trees}\label{sect:exple-compact-LP-tree}


In this paper, we study the complexity of learning a specific class of preference relations: the preference relations that can be represented with lexicographic preference trees, or LP-trees. LP-trees generalise lexicographic orders, which have been widely studied in decision making -- see e.g.~\cite{Fishburn:managsc74}. As an inference mechanism, they are equivalent to search trees used by \cite{Boutilieretal:compint04}, and formalised by \cite{Wilson:ecai04,Wilson:aij11}. As a preference representation, and elicitation, language, slightly different definitions for LP-trees have been proposed by \cite{Boothetal:ecai10,BrauningHullermeier:pl12,FargierGimenezMengin:aaai18}.

LP-trees provide 
a nice graphical representation of the corresponding preference relation. We illustrate that on an example before giving the formal definition that we use in this paper.

\begin{example}({Example A in \cite{Wilson:aij11},} slightly extended)\label{exple:CP-th}
I am planning a holiday, with three choices / attributes: wait til next month ($W=w$) or leave now ($W=ˉw$), going to city 1, 2 or 3 ($ C=c₁ $, $ C=c₂ $ or $ C=c₃ $), travelling by plane ($ P=p $) or by car ($P=ˉp$). The picture below 
shows an LP-tree $ φ₀ $ over $ \cal X  = \{W,C,P\} $ which defines a linear order $ ≻ $ over $ \domX $ as follows.
\tikzset{LPTnode/.style={draw,ellipse,inner sep=2pt,anchor=base,outer sep=0pt}}
\tikzset{LPTtable/.style={draw,rectangle,rounded corners,inner sep=2pt,anchor=base,outer xsep=5pt}}
\begin{center}
\begin{tikzpicture}[x=3em,y=-2.5em]
\node[LPTnode] (W) at (0,0) {$W$} ; \node[LPTtable,anchor=west] at (W.east) {$ˉw>w$}; \node[anchor = east] at (W.west) {$φ₀$ : \ \ } ;
\node[LPTnode] (CP) at (-1,1) {$CP$} ; \node[LPTtable,anchor=north east] at (CP.south)
  {$\array c c₃p > c₁p > c₃ˉp > c₁ˉp >\\c₂ˉp > c₁p > c₂p > c₂ˉp \endarray$};
\node[LPTnode] (P) at (1,1) {$P$} ; \node[LPTtable,anchor=west] at (P.east) {$ˉp > p$};
\node[LPTnode] (C) at (1,2) {$C$} ; \node[LPTtable,anchor=north] at (C.south) {$c₃ > c₁ > c₂$};
\draw (CP) to node[below,pos=.9] {$ˉw$} (W)to node[below,pos=.15] {$w$}(P)--(C) ;
\end{tikzpicture}
\end{center}
\begin{itemize}

\item The root of $ φ₀ $ is labelled with attribute $ W $, meaning that this is the most important attribute in my decision. Associated to it is the \emph{local preference} $ \bar w > w $, that indicates that I would rather go now, irrespective of the other attributes: given two alternatives $ o $ and $ o' $ such that $ o[W] = \bar w $ and $ o'[W] = w $, $ o ≻ o' $.

\item On the right branch, the next node is labelled with $ P $; it is connected to its parent with an edge labelled with $ w $, and has the local preference $ \bar p > p $: it indicates that if I go later ($w$) then the second most important attribute is the means of transport, and that I'd rather avoid flying; so whatever the cities $ c_i $ and $ c_j $, $ w\bar pc_i ≻ wpc_j $.

\item Still on the right branch, the leave is labelled with $ C $ and has associated local preference $c₃ > c₁ > c₂$, indicating that, still when going later, if I must choose between alternatives that have the same means of transport, I prefer the alternative that has my most preferred destination $ c₃ $, whereas my least preferred destination is $ c₂ $; so for instance $ wpc₃ ≻ wpc₁ ≻ wpc₂ $.

\item The only node below the root on the left branch is labelled with the pair of attributes $ CP $. It is connected to its parent with an edge labelled with $ \bar w $, and has an associated local preference that linearly orders the cartesian product of the domains of $ C $ and $ P $. It indicates that, if I go now ($ \bar w $), I'd rather fly to $ c₁ $ than drive to $ c₃ $, but drive to $ c₂ $ rather than fly to $ c₁ $: $ \bar wpc₁ ≻ \bar w\bar pc₃ $ and $ \bar w\bar pc₂ ≻ \bar wpc₁ $.

\end{itemize}

Now, because the root indicates that $ W $ is the most important attribute and that $ \bar w > w $, every alternative that has $ W=\bar w $ will be preferred to every alternative that has $ W=w $. So the most preferred alternative is $ \bar wc₃ p $, whereas the least preferred one is $ w\bar pc₃ $. And $ w\bar pc₂ $ is preferred to $ wpc₃ $, because both alternatives have equal value for the attribute at the root ($ W = \bar w $ for both), and the first node where they have differing values is the one labelled with $ P $, and $ \bar p > p $ at that node.
\end{example}

\begin{definition} \label{def:lp-tree}
An LP-tree $ φ $ over $ \cal X  $ is a rooted tree with labelled nodes and edges, and a set of local preference tables; specifically
\begin{itemize}
\item every node $ N $ is labelled with a set of attributes, denoted $ §Var§(N) $;

\item if $ N $ is not a leaf, it can have one child, or $ | \dom {§Var§(N)} | $ children;

\item in the latter case, the edges that connect $ N $ to its children are labelled with the instantiations in $ \dom {§Var§(N)} $;

\item if $ N $ has one child only, the edge that connects $ N $ to its child is not labelled: all instantiations in $ \dom {§Var§(N)} $ lead to the same subtree;

\item a local preference table $ §CPT§(N) $ is associated with $ N $, it specifies a linear order over  $ \dom {§Var§(N)} $;

\item moreover, every attribute must appear exactly once on every branch of $ φ $.
\end{itemize}

We denote by $ §Anc§(N) $ the set of \emph{ancestors} of $ N $: the attributes that appear in the nodes between the root and $ N $ (excluding those at $ N $), and by $ §Inst§(N) $ (resp. $ §NonInst§(N) $) the set of attributes that appear in the nodes above $ N $ that have more than one children (resp. only one child). Finally, we denote by $ §Desc§(N) $ the set of \emph{descendents} of $ N $: the attributes that appear below $ N $; thus $ (§Anc§(N),§Var§(N),§Desc§(N)) $ is a partition of $ \cal X  $.

Given an LP-tree $ φ $ and an alternative $ o ∈ \dom{\cal X } $, there is a unique way to traverse the tree, starting at the root, and along edges that are either not labelled, or labelled with instantiations that agree with $ o $, until a leaf is reached.

LP-tree $ φ $ defines a linear order $ ≻ _φ $ over $ \domX $ as follows: given two distinct alternatives $ o, o' $, it is possible to traverse the tree along edges that correspond to $ o $ and $ o' $ as long as $ o $ and $ o' $ agree, until a node $ N $ is reached which is labelled with some $ W $ such that $ o[W] ≠ o'[W] $: we say that $ N $ decides $ \{o,o'\} $, and $ o ≻_φ o' $ if and only if $ o[W] > o'[W] $ in the linear order in $ §CPT§(N) $. In the sequel, slightly abusing notations for ease of readability, we will often write φ where $ ≻_φ $ would be expected: for instance, we will denote by $ §rank§(φ,o) $ the rank of alternative $ o $ in the linear order $ ≻_φ $; and write $ §rloss§_p(φ,̆φ) $  the ranking loss of $ ≻_φ $ with respect to target linear order $ ̆≻ $ if $ ̆φ $ is an LP-tree that represents $ ̆≻ $.
\end{definition}

Definition~\ref{def:lp-tree} corresponds to the $ k $-LP-trees of \cite{FargierGimenezMengin:aaai18}. It is more general than the definition of LP-trees of \cite{BrauningHullermeier:pl12} because it allows for nodes with one child only, and is more general than the definition of \cite{Boothetal:ecai10} in that it allows for more than one attribute at every node. (However, \cite{Boothetal:ecai10} allow \emph{incomplete} LP-trees, where some attributes may be missing on some branch, leading to partial orders over $ \domX $.)  Every linear order can be represented with an LP-tree as defined above, possibly with a single node that contains all attributes at the root.

\cite{FargierGimenezMengin:aaai18} give a greedy, top-down unsupervised algorithm to learn such an LP-tree from a multiset of positive examples, and describe some experiments.

\section{Computing the rank expectation}
\label{sect:rank}


Many results in the sequel crucially rely on the possibility to decompose the rank of any alternative $ o $ in (the order represented by) any LP-tree. In this section, we slightly modify the decomposition given by \cite{LangMenginXia:artint18} in order to compute the rank expectation over all alternatives.  We illustrate this on an example first, and then give the general formula.

%
%

\begin{example}

Consider the LP-tree $ φ₀ $ of Example~\ref{exple:CP-th}, and suppose that we want to compute the rank of alternative $ wc₁p $: this amounts to counting the number of alternatives that are ``to its left'' in $ φ₀ $. $ wc₁p $ is less preferred than all alternative that have $ W=\bar w $, there are $ \card{\dom{CP}} = 3 × 2 = 6 $ of them. Alternatives that have $ W=w $ and $ P=\bar p $ are also preferred to $ wc₁p $, there are $ \card{\dom C} = 3 $ of them. $ wc₁p $ is also less preferred than $ wc₃p $. Finally $ §rank§(φ₀ , wc₁p) = 1 + (6 + 3 + 1)  = 11 $.

In general, the rank of any alternative can be decomposed as (one plus) a sum of contributions at every node: the root of $ φ₀ $ contributes 6 to the rank of $ wc₁p $, the left-most leaf labelled $ CP $ contributes $ 0 $ because it is not on the branch that corresponds to $ wc₁p $, the node labelled $ P $ contributes 3, and the node labelled $ C $ contributes 1.
\end{example}


More generally, given LP-tree $ φ $ and alternative $ o $, it can be shown that:
{\iftc\arraycolsep0pt\begin{align}
  & §rank§(φ,o) = 1 + \sum_{{N ∈ §nodes§(φ)}}  ⟦o[§Inst§(N)] = §inst§(N) ⟧ \label{eq:rank-decomp}
   \\&\nonumber\qquad\qquad× ( r(§Var§(N)>_N,o[§Var§(N)]) - 1) × \card{\dom{§Desc§(N)}} 
\end{align}
\else
\begin{equation}
   §rank§(φ,o) = 1 + \label{eq:rank-decomp}
   \sum_{\mathclap{N ∈ §nodes§(φ)}}  ⟦o[§Inst§(N)] = §inst§(N) ⟧ \nonumber
   × \big( r(§Var§(N), >_N,o[§Var§(N)]) - 1\big) 
   × \card{\dom{§Desc§(N)}} 
\end{equation}
\fi}
where :
\begin{itemize}

\item  $ §nodes§(φ) $ denotes the set of nodes of LP-tree $ φ $;

\item $ ⟦o[§Inst§(N)] = §inst§(N) ⟧ $ is an indicator function, that equals 1 when the condition $ o[§Inst§(N)] = §inst§(N)$ is true; that is, when $ N $ is on the branch of $ φ $ that corresponds to $ o $; and equals 0 otherwise;

\item $ >_N $ is the linear order over $ \dom{§Var§(N)} $ specified in $ §CPT§(N) $;

\item $ r(§Var§(N),>_N,o[§Var§(N)]) $ denotes the rank in $ §Var§(N) $ with respect to $ >_N $ of the instantiation given by $ o  $ to $ §Var§(N) $; so that $ r(§Var§(N)>_N,o[§Var§(N)]) - 1 $ is the number of subtrees rooted at children of $ N $ that are less preferred than $ o $ at $ N $;

\item $ §Desc§(N) = \cal X  - (§Anc§(N)∪§Var§(N)) $ is the set of attributes that appear below $ N $ in that branch, so that $ \card{\dom{§Desc§(N)}} $ is the number of instantiations that are ``contained" in every subtree of $ φ $ rooted at a child of $ N $.

\end{itemize}

The difference with the decomposition of \cite{LangMenginXia:artint18} is that we sum over all nodes of $ φ $, irrespective of the alternative. This is useful to express the expectation of the rank of φ with respect to some probability distribution $ p $ over $ \domX $. For set of attributes $ V ⊆ \cal X  $, linear order $ > $ over $ \dom V $, and $ v ∈ \dom V $, let $ r(V,>,v) $ be the rank of $ v $ in $ \dom V $ with respect to $ > $. Then the expectation of this rank, w.r.t. $ p $, is
$$
  E_p[r(V,>,⋅)] = \sum_{v ∈ \dom V} p(v)r(V,>,v)
$$
where $ p(v) $ denotes the probability of drawing an alternative $ o $ such that $ o[V] = v $. 


\begin{propositionrep}
\label{prop:rank-decomp}
For LP-tree φ and probability distribution $ p $:
\begin{align}
&   E_p[§rank§(φ,
⋅)] = 1 +
\\&\nonumber\qquad    \sum_{{N ∈ §nodes§(φ)}} \card{\dom{§Desc§(N)}}× p(§inst§(N)) 
\\&\nonumber\qquad\qquad\qquad   × E_{p|§inst§(N)}[r(§Var§(N),>_N,⋅) - 1] 
\end{align}
where $ p|§inst§(N) $ denotes the probability distribution marginalized to $ §inst§(N) $, that is: $ p|§inst§(N)(v) = p(v |§inst§(N)) $.
\end{propositionrep}

\begin{proofsketch}
By definition, $ E_p[§rank§(φ,⋅)] = \sum_{o ∈ \domX } \big(p(o) × §rank§(φ,o)\big) $. Equation~\ref{eq:rank-decomp} shows that $ §rank§(φ,o) $ can be decomposed as a sum of contributions over all nodes of $ φ $. It is not difficult to see that it is possible to invert the sum over $ \domX $ and the sum over $ §nodes§(φ) $, which yields the result.
\end{proofsketch}

\begin{proof}
\compactmath
\begin{align*}
  E_p[§rank§(φ,⋅)] & = \sum_{o ∈ \domX } \big(p(o) × §rank§(φ,o)\big)
\\&= 1 + \sum_{o ∈ \domX } \bigg(p(o) × \sum_{\mathclap{N ∈ §nodes§(φ)}}
    ⟦o[§Inst§(N)] = §inst§(N) ⟧ × ( r(§Var§(N),>_N,o[§Var§(N)]) - 1) × \card{\dom{§Desc§(N)}}\bigg)
\\&= 1 + \sum_{o ∈ \domX } \qquad \sum_{\mathclap{N ∈ §nodes§(φ)}} \cramped{\bigg(p(o) × 
    ⟦o[§Inst§(N)] = §inst§(N) ⟧ × ( r(§Var§(N),>_N,o[§Var§(N)]) - 1) × \card{\dom{§Desc§(N)}}\bigg)}
\\&= 1 + \sum_{N ∈ §nodes§(φ)} \bigg(\card{\dom{§Desc§(N)}} \qquad
	\sum_{\mathclap{\substack{o ∈ \domX \\o[§Inst§(N)] = §inst§(N)}}} p(o) × (r(§Var§(N),>_N,o[§Var§(N)]) - 1)\bigg)
\\&= 1 + \sum_{N ∈ §nodes§(φ)} \bigg( \card{\dom{§Desc§(N)}} ×
    \sum_{v ∈ \dom{§Var§(N)}} \big( (r(§Var§(N),>_N,v) - 1)
	\sum_{\substack{o ∈ \domX \\o[§Inst§(N)] = §inst§(N)\\o[§Var§(N)=v]}} p(o) \big) \bigg)
\\&= 1 + \sum_{N ∈ §nodes§(φ)} \bigg( \card{\dom{§Desc§(N)}} ×
    \sum_{v ∈ \dom{§Var§(N)}} p(v∧§inst§(N))(r(§Var§(N),>_N,v) - 1)\bigg)
\\&= 1 + \sum_{N ∈ §nodes§(φ)} \bigg( \card{\dom{§Desc§(N)}} × p(§inst§(N))
    \sum_{v ∈ \dom{§Var§(N)}} p(v|§inst§(N))(r(§Var§(N),>_N,v) - 1)\bigg)
\end{align*}
\end{proof}

\paragraph{Locally optimal LP-trees}

Proposition~\ref{prop:rank-decomp} above suggests that, given some LP-tree $ φ $, and a probability distribution $ p $, one can reduce the rank expectation of $ φ $ by re-ordering the values at every node $ N $, so as to minimize $ E_{p|§inst§(N)}[r(§Var§(N),>_N,⋅)] $: essentially, one only has to order the instantiations  of $ §Var§(N) $ in order of non-increasing probability. Thus we have here a desirable property shared by many graphical models of preferences, namely that learning can often be decomposed in two parts: 1) learning the structure, often a hard problem; and 2) learning the \emph{local} preferences, often an easy task.

\begin{definition}
Given a probability distribution $ p $ over $ \domX $, we say that LP-tree $ φ $ is \emph{$p$-locally optimal} if at every node $ N $ of $ φ $, the linear order $ >_N $ orders the instantiations of $ §Var§(N) $ in order of non-increasing probability.
\end{definition}

In the sequel, given set of attributes $ V ⊆ \cal X  $, and probability distribution $ p $ over $ \domX $, we denote by $ E^∗_p(V) $ the minimum expectation that is possible for the ranks of the instantiations of $ V $ given $ p $, that is, when ordering these values in order of non-increasing probability $ p $:
$$
  E^∗_p(V) = \argmin_{>}E_p[r(V,>,⋅)]
$$


In the next sections, we study, for different classes of LP-trees, how difficult it is to compute an LP-tree that has minimum mean rank with respect to sample $ \cal S $, and how many alternatives in $ \cal S $ guarantee that this $ \cal S $-optimal LP-tree is close to the target one.

\section{Linear lexicographic preference trees with univariate nodes}

We consider in this section the class of \emph{linear} LP-trees, that is, LP trees that have a single branch; and also such that every node is labelled with a single attribute. Let us denote by $ §LPT§¹_{§lin§} $ this class. It is in fact the class of the usual lexicographic preference relations \cite{Fishburn:managsc74}. 

\subsection{Greedy algorithm for the optimal LP-tree}

Note first that at every node $ N $ of some $ φ ∈ §LPT§¹_{§lin§} $, $ §Inst§(N) $ is empty, so $ §inst§(N) $ is the empty instantiation, and $ p|§inst§(N) = p $. Thus :
\begin{align*}
   &E_p[§rank§(φ,⋅)] = 1 +
   \\&\qquad\sum_{{N ∈ §nodes§(φ)}}
     \card{\dom{§Desc§(N)}} × E_p[r(§Var§(N),>_N,⋅) - 1] 
\end{align*}
 
 We define, for every attribute $ X ∈ \cal X  $ and probability distribution $ p $ over $ \domX$:
 $$
   §Score§(p,X) = E^∗_p(X) \mathrel / (\card{\dom X} - 1) 
 $$


\begin{proposition} \label{prop:score-lin-univ}
Let $ p $ be a probability distribution over $ \domX$, the LP-tree that has minimal expected rank in $ §LPT§¹_{§lin§} $ has the attributes in order of non-decreasing values for $ §Score§(p,⋅) $ along its single branch from the root down to its single leaf, and is locally optimal.
\end{proposition}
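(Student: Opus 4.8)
The plan is to combine the rank-expectation formula of Proposition~\ref{prop:rank-decomp} with two independent optimisations — one over the local preference tables, one over the order of the attributes along the single branch — and to show that the second reduces to a sorting problem solvable by a simple exchange argument.

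First I would specialise the formula. Write the attributes along the branch, from root to leaf, as $X_1,\dots,X_n$, with $d_i=\card{\dom{X_i}}$. Since each node $N_i$ has $§Inst§(N_i)$ empty, we have $p|§inst§(N_i)=p$ and $\card{\dom{§Desc§(N_i)}}=\prod_{j>i}d_j$, so
\[
E_p[§rank§(φ,⋅)] = 1 + \sum_{i=1}^{n}\Big(\prod_{j>i} d_j\Big)\, E_p[r(X_i,>_i,⋅)-1].
\]
The key observation is that each local order $>_i$ occurs in exactly one summand, multiplied by a coefficient $\prod_{j>i}d_j\ge 1$ that does not depend on $>_i$. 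Hence, for any fixed attribute order, the summands can be minimised independently over the $>_i$: minimising $E_p[r(X_i,>_i,⋅)]=\sum_{x}p(x)\,r(x)$ over the bijections $r:\dom{X_i}\to\{1,\dots,d_i\}$ is, by the rearrangement inequality (equivalently, a one-step exchange on two mis-ordered values), achieved by listing the values of $X_i$ in non-increasing probability. This proves that an optimal tree must be locally optimal and that at such a tree the $i$-th summand equals $\big(\prod_{j>i}d_j\big)(E^*_p(X_i)-1)$.

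It then remains to choose the attribute order minimising $C(\pi)=1+\sum_i\big(\prod_{j>i}d_j\big)(E^*_p(X_{\pi(i)})-1)$; note this residual cost still depends on the order, because the constants $\prod_{j>i}d_j$ do. I would settle it by an adjacent-transposition argument. Take any order and swap two neighbours $X$ (domain $d_X$) and $Y$ (domain $d_Y$); if $P$ is the product of the domain sizes of the attributes strictly below both, a direct computation gives the cost change
\[
\big[\,X\text{ before }Y\,\big]-\big[\,Y\text{ before }X\,\big] = P\big[(d_Y-1)(E^*_p(X)-1)-(d_X-1)(E^*_p(Y)-1)\big].
\]
Since $P>0$, placing $X$ before $Y$ is no worse exactly when $\dfrac{E^*_p(X)-1}{d_X-1}\le\dfrac{E^*_p(Y)-1}{d_Y-1}$, i.e. when the attributes are in non-decreasing order of the key $§Score§(p,⋅)$ from root to leaf. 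A bubble-sort argument then finishes: any order can be brought to the $§Score§$-sorted one by a sequence of swaps of out-of-order adjacent pairs, each of which strictly decreases $C$, so the $§Score§$-sorted branch attains the minimum; equal keys produce equal cost, consistent with ``non-decreasing''.

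The main obstacle is conceptual rather than computational: recognising that the problem separates cleanly into local-table optimisation (immediate, by rearrangement) and a pure sequencing problem, and that the exchange difference factors into a comparison of per-attribute ratios — a Smith's-rule-style sorting criterion — so that a single global sort is optimal rather than merely locally optimal. The one point that demands care is the exact key: the factorisation above shows it is $\frac{E^*_p(X)-1}{d_X-1}$, and identifying this with the stated $§Score§(p,⋅)$, together with checking that in the tie case every $§Score§$-sorted branch is optimal, is the only genuinely delicate step.
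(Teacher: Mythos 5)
Your proof takes exactly the paper's route: specialise the rank decomposition of Proposition~\ref{prop:rank-decomp} to a single branch, optimise each local table by the rearrangement argument, and order the attributes by an adjacent-exchange (bubble-sort) argument, with ties handled as you describe. Your exchange computation is also correct --- and this is precisely where the comparison becomes interesting, because the step you defer as ``the only genuinely delicate step'', namely identifying your key $\bigl(E^*_p(X)-1\bigr)/\bigl(\card{\dom X}-1\bigr)$ with the paper's $§Score§(p,X)=E^*_p(X)/\bigl(\card{\dom X}-1\bigr)$, cannot be carried out. With the paper's definitions ranks start at $1$, so $E^*_p(X)\ge 1$, and the two keys order the attributes differently as soon as domain sizes differ. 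Concretely, take $X$ ternary with uniform marginal and $Y$ binary with marginal $(0.6,0.4)$, independent, so that $E^*_p(X)=2$ and $E^*_p(Y)=1.4$. Then $§Score§(p,X)=1<1.4=§Score§(p,Y)$, so the proposition as written places $X$ at the root; but
\[
1+2\bigl(E^*_p(X)-1\bigr)+\bigl(E^*_p(Y)-1\bigr)=3.4
\;>\;
3.2
=1+3\bigl(E^*_p(Y)-1\bigr)+\bigl(E^*_p(X)-1\bigr),
\]
so the tree with $Y$ at the root has strictly smaller expected rank. Your key puts $Y$ first ($0.4<0.5$), which is the correct choice.

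The mismatch is an error in the paper, not a gap in your reasoning. The paper's own proof asserts that the swap difference equals $\card{\dom{§Desc§(N')}}\times\bigl(E^*_p(X)(\card{\dom Y}-1)-E^*_p(Y)(\card{\dom X}-1)\bigr)$; the ``$-1$'' inside the expectations $E_p[r(\cdot,\cdot,\cdot)-1]$ has been dropped there, which shifts the true value by $\card{\dom{§Desc§(N')}}\times(\card{\dom X}-\card{\dom Y})$ and is therefore harmless only when all domains have equal size. Your derivation is the correct version of that same argument. The honest conclusion is that you have proved the proposition with the score redefined as $\bigl(E^*_p(X)-1\bigr)/\bigl(\card{\dom X}-1\bigr)$ (equivalently, with $E^*_p$ defined as the minimum of $E_p[r-1]$ rather than of $E_p[r]$), and that the statement with the paper's literal $§Score§$ is false; no additional care in your final step could have rescued the identification, so you should state the corrected score explicitly instead of deferring it.
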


\begin{proof}
Let $ φ $ be a locally optimal tree of $ §LPT§¹_{§lin§} $. Suppose that there are two nodes of φ, $ N $ and $ N' $, labelled with $ X $ and $ Y $ respectively, such that $ N' $ is the child of $ N $ but $ §Score§(p,X)> §Score§(p,Y) $. Let $ φ' $ be identical to $ φ $ except that $ X $ and $ Y $ have been inverted: $ X $ is at $ N' $ in $ φ' $, and $ Y $ at $ N $. Then $ §Desc§(N) = §Desc§(N') ∪ §Var§(N') $, where $ §Var§(N') = \{Y\} $ in $ φ $, and$ §Var§(N') = \{X\} $ in $ φ' $. Thus:
$$ \begin{array}{l}
  E_p[§rank§(φ,⋅)]  - E_p[§rank§(φ',⋅)] = \\ \qquad \card{\dom{§Desc§(N')}} ×\big(
    E^∗_p(X) (\card{\dom Y} - 1) - E^∗_p(Y) (\card{\dom X} - 1)
  \big)
\end{array}$$
On the other hand:
\begin{align*}
  & §Score§(p,X) > §Score§(p,Y) \\
  & ⇔ \frac{E^∗_p(X)}{\card{\dom X} - 1} - \frac{E^∗_p(Y)}{\card{\dom Y} - 1} >0
\\& ⇔ E^∗_p(X)×(\card{\dom Y} - 1) - E^∗_p(Y)×(\card{\dom X} - 1) > 0
\\& ⇔ E_p[§rank§(φ,⋅)]  - E_p[§rank§(φ',⋅)] > 0
\end{align*}
This shows that, by applying a kind of \emph{bubble sort} to the nodes of $ φ $ so as to order them in order of non-decreasing score, we obtain a new tree that has better expected rank.
\end{proof}

As a consequence:

\begin{corollary} \label{ppty:lin-univ-score-target}
Let $ ̆φ $ be some target LP-tree in $ §LPT§¹_{§lin§} $ and $ p $ some probability distribution over $ \domX $ non-decreasing w.r.t. $ §rank§(̆φ,⋅) $, then the attributes appear in $ ̆φ  $ in order of non-decreasing values for $ §Score§(p,⋅) $ from the root down to the leaf.
\end{corollary}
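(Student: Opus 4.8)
The plan is to deduce the structure of the target tree from Proposition~\ref{prop:score-lin-univ} by first showing that $\breve\phi$ is itself a minimiser of the expected rank in $§LPT§^1_{§lin§}$. I would start from Proposition~\ref{rloss-pos}, which gives $§rloss§_p(\phi,\breve\phi)\ge 0$ for every LP-tree $\phi$ (applying it to the linear orders they represent). By Equation~\eqref{eq:rloss=exp-rank-diff} this says precisely that $\frac{1}{\card{\domX}}\big(E_p[§rank§(\phi,\cdot)]-E_p[§rank§(\breve\phi,\cdot)]\big)\ge 0$, hence $E_p[§rank§(\breve\phi,\cdot)]\le E_p[§rank§(\phi,\cdot)]$ for all linear orders, and in particular for all $\phi\in§LPT§^1_{§lin§}$. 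Since $\breve\phi$ belongs to this class by hypothesis, it attains the minimum expected rank in $§LPT§^1_{§lin§}$.

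Next I would connect this minimality to the score ordering. The swap (bubble-sort) argument behind Proposition~\ref{prop:score-lin-univ} is phrased for \emph{locally optimal} trees, so I would pass to the tree $\breve\phi^{\mathrm{loc}}$ that keeps the branch of $\breve\phi$ unchanged (the same sequence of attributes) but re-orders each local table $>_N$ by non-increasing probability $p$. By the decomposition of Proposition~\ref{prop:rank-decomp} and the accompanying remark on locally optimal trees, this re-ordering can only lower the expected rank, so $E_p[§rank§(\breve\phi^{\mathrm{loc}},\cdot)]\le E_p[§rank§(\breve\phi,\cdot)]$; together with the previous paragraph this forces equality, so $\breve\phi^{\mathrm{loc}}$ is again a minimiser and is locally optimal. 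The argument of Proposition~\ref{prop:score-lin-univ} then applies to $\breve\phi^{\mathrm{loc}}$: were two consecutive attributes $X$ (at a node $N$) and $Y$ (at its child) to satisfy $§Score§(p,X)>§Score§(p,Y)$, swapping them would strictly decrease the expected rank, contradicting minimality; hence the attributes of $\breve\phi^{\mathrm{loc}}$ are sorted by non-decreasing $§Score§(p,\cdot)$ from root to leaf. As $\breve\phi$ and $\breve\phi^{\mathrm{loc}}$ have the same sequence of attributes, the conclusion transfers to $\breve\phi$.

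The main obstacle is precisely this passage through local optimality, because Proposition~\ref{prop:score-lin-univ} characterises the optimal locally optimal tree rather than an arbitrary minimiser; the clean way around it is the observation that $\breve\phi$ and its locally optimal re-ordering share the same structure, so a statement about one sequence of attributes transfers to the other. A fully self-contained alternative, which would let one apply Proposition~\ref{prop:score-lin-univ} to $\breve\phi$ directly, is to verify that $\breve\phi$ is itself locally optimal: at a node labelled $X$, if $x_1>_N x_2$ then replacing the value $x_2$ of $X$ by $x_1$ (leaving all other attributes fixed) is a bijection between the alternatives with $X{=}x_2$ and those with $X{=}x_1$ that strictly lowers the rank in $\breve\phi$, so by the monotonicity of $p$ in $§rank§(\breve\phi,\cdot)$ the marginals satisfy $p(X{=}x_1)\ge p(X{=}x_2)$. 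The only remaining subtlety is ties in $§Score§$, which is exactly why the statement asks for the non-strict \emph{non-decreasing} order.
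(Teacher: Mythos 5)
Your proof is correct and takes the paper's intended route: the corollary is stated there without a separate proof (``As a consequence'' of Proposition~\ref{prop:score-lin-univ}), and the implicit argument is exactly your first paragraph --- Proposition~\ref{rloss-pos} together with Eq.~(\ref{eq:rloss=exp-rank-diff}) makes $\breve\phi$ a minimiser of the expected rank within $§LPT§^1_{§lin§}$, to which Proposition~\ref{prop:score-lin-univ} is then applied. Your extra care about local optimality --- either passing to the re-ordered tree $\breve\phi^{\mathrm{loc}}$ with the same attribute sequence, or verifying directly via the swap bijection and the monotonicity of $p$ that $\breve\phi$ is itself locally optimal --- correctly fills in a step the paper leaves implicit, since the swap argument in Proposition~\ref{prop:score-lin-univ} is only proved for locally optimal trees.
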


\begin{corollary} \label{ppty:incr-score-target}
Let $ \cal S  ⊆ \domX $, recall that $  \probS $ denotes the probability distribution over $ \domX$ such that $ \probS(o) $ is the frequency of $ o $ in $ \cal S $.
Let $ φ^∗ $ be the LP-tree in $ §LPT§¹_{§lin§} $ that has minimal mean rank w.r.t. $ \cal S  $, then then the attributes appear in $ φ^∗ $ in order of non-decreasing values for $ §Score§(\probS,⋅) $ from the root down to the leaf.
\end{corollary}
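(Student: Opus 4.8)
The plan is to obtain the statement as an immediate instantiation of Proposition~\ref{prop:score-lin-univ}, the only link to establish being that minimising the empirical mean rank with respect to $\cal S$ is the same thing as minimising an expected rank with respect to the empirical distribution $\probS$. First I would recall, from the definition of the empirical mean rank given in the learning model, that $\erank(≻,\cal S) = E_{\probS}[§rank§(≻,⋅)]$ for every linear order $≻$. Hence the LP-tree $φ^∗ ∈ §LPT§¹_{§lin§}$ that minimises $\erank(⋅,\cal S)$ is exactly the LP-tree of $§LPT§¹_{§lin§}$ that minimises the expected rank $E_{\probS}[§rank§(≻,⋅)]$ taken with respect to $\probS$.

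Next, I would observe that $\probS$ is a genuine probability distribution over $\domX$: each value $\probS(o) = m(\cal S,o)/\card{\cal S}$ is a frequency, and these values sum to one over $o ∈ \domX$. Proposition~\ref{prop:score-lin-univ} therefore applies verbatim with $p$ taken to be $\probS$, and it asserts that the minimiser of $E_{\probS}[§rank§(≻,⋅)]$ in $§LPT§¹_{§lin§}$ lays out its attributes, along its single branch, in order of non-decreasing $§Score§(\probS,⋅)$ from the root down to the leaf. Combining this with the identification of the two minimisers from the previous step yields precisely the claim about $φ^∗$.

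I expect no real obstacle here: the corollary is a direct specialisation of Proposition~\ref{prop:score-lin-univ}, exactly as Corollary~\ref{ppty:lin-univ-score-target} specialises it to a target distribution. The only point that genuinely needs checking is the elementary identity $\erank(≻,\cal S) = E_{\probS}[§rank§(≻,⋅)]$, which holds by construction of $\probS$. It is also worth noting that equal scores are allowed, so the ordering is only non-decreasing rather than strict; any tie-breaking among attributes sharing the same value of $§Score§(\probS,⋅)$ produces an equally optimal tree, which is why $φ^∗$ need not be unique.
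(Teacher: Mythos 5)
Your proof is correct and follows exactly the paper's (implicit) reasoning: the corollary is stated as an immediate consequence of Proposition~\ref{prop:score-lin-univ}, obtained by instantiating $p$ with the empirical distribution $\probS$ via the identity $\erank(\succ,\mathcal{S}) = E_{\probS}[§rank§(\succ,\cdot)]$. Nothing further is needed.
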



\subsection{Time and sample complexity}

According to Corollary~\ref{ppty:lin-univ-score-target} above, in order to compute $ φ ^∗ $, we can,  for every $ X ∈ \cal X  $, compute the number of occurrences in $ \cal S  $ of every $ x ∈ \dom X $, and compute $>_X^{\probS} $ and $ §Score§(\probS,X) $, and then order the attributes w.r.t. $ §Score§(\probS,⋅) $. Thus:

\begin{proposition} \label{ppty:lin-univ-score-opt}
Given sample $ \cal S  ⊆ \domX $, computing $ φ ^∗ $ in $ §LPT§¹_{§lin§} $ can be done in time in $ O(n\log n\card{\cal S }d\log d) $, where $ d $ is an upper bound on the size of domains of each attribute $ X ∈ \cal X  $.
\end{proposition}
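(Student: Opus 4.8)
The plan is to exhibit the concrete algorithm already pointed to by the text preceding the statement, and then to account for the cost of each of its phases. Correctness is entirely delegated to Corollary~\ref{ppty:incr-score-target}: it tells us that $φ^∗$ is obtained by listing the attributes along a single branch in order of non-decreasing $§Score§(\probS,⋅)$, each node $X$ carrying the local order $>_X^{\probS}$ that sorts the values of $X$ by non-increasing empirical frequency. So computing $φ^∗$ reduces to three tasks: (i) gathering, for every attribute $X$ and every value $x ∈ \dom X$, its number of occurrences $m(\cal S,x)$ in the sample; (ii) for each $X$, deriving from these counts both the local order $>_X^{\probS}$ and the number $§Score§(\probS,X)$; and (iii) sorting the $n$ attributes by their score and outputting the corresponding single-branch tree.

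For phase (i), I would scan $\cal S$ once, and for each of the $\card{\cal S}$ alternatives increment, for each of its $n$ attribute values, the corresponding counter; keeping the counters in $n$ arrays of size at most $d$ makes each update $O(1)$, so this phase costs $O(n\card{\cal S})$ (plus $O(nd)$ for allocation). For phase (ii), at each attribute $X$ I would sort its at most $d$ value-counts in non-increasing order; as explained in the discussion of $p$-local optimality, this non-increasing-frequency order is exactly the one realising the minimum expected rank $E^∗_{\probS}(X)$, and a single further linear pass over the sorted values computes $E^∗_{\probS}(X)=\sum_i i\,\probS(x_i)$ and hence $§Score§(\probS,X)=E^∗_{\probS}(X)/(\card{\dom X}-1)$. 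Each attribute thus costs $O(d\log d)$, for a total of $O(n\,d\log d)$. Phase (iii) is a sort of $n$ precomputed scalars, costing $O(n\log n)$ comparisons of unit cost, and writing out the branch with its tables costs a further $O(nd)$.

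Summing the phases gives a running time in $O\big(n\card{\cal S}+n\,d\log d+n\log n\big)$, and since (in the regime of interest, $n,d\ge 2$ and $\card{\cal S}\ge 1$) each of these three terms is dominated by the product $n\log n\,\card{\cal S}\,d\log d$, this yields the announced bound $O(n\log n\,\card{\cal S}\,d\log d)$.

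I do not expect a genuine mathematical obstacle here: the hard content is already contained in Corollary~\ref{ppty:incr-score-target}, and what remains is a routine per-phase accounting. The two points that merely require care are bookkeeping ones — namely, that the marginal empirical frequency of a value $x$ of $X$ equals $m(\cal S,x)/\card{\cal S}$ and is therefore recovered from one pass over $\cal S$ rather than by any enumeration of $\domX$, and that the attribute sort in phase (iii) compares already-computed scores so that each comparison is $O(1)$. The slight gap between the natural additive bound and the stated multiplicative one is harmless, since the product dominates the sum; I would simply note this rather than try to sharpen the bound.
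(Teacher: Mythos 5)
Your proposal is correct and follows essentially the same route as the paper, whose own justification is precisely the sentence preceding the proposition: count the occurrences of every value $x \in \dom X$ in $\cal S$, derive $>_X^{\probS}$ and $§Score§(\probS,X)$ for each attribute, and sort the attributes by score, with correctness delegated to Corollary~\ref{ppty:incr-score-target}. Your per-phase accounting, yielding the tighter additive bound $O(n\card{\cal S} + nd\log d + n\log n)$ and then observing that the stated product dominates it, is a sound (indeed sharper) way of establishing the claimed bound.
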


\begin{proposition}
$ S(§LPT§^1_{§lin§},n,δ,ε) = O(\frac1{ε²}(\ln\frac1{δ}+\ln d + \ln(n+1)))d⁴) $.
\end{proposition}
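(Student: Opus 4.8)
The plan is to reduce the ranking loss to a comparison of \emph{normalised} expected ranks under the true distribution $p$ and the empirical distribution $\probS$, and then to control that comparison by a single uniform concentration argument. Throughout I assume realisability, i.e. that the target order $\breve\succ$ is represented by some $\breve\varphi$ lying in the class $\mathrm{LPT}^1_{\mathrm{lin}}$. Write $F_q(\varphi) = (E_q[\mathrm{rank}(\varphi,\cdot)]-1)/|\domX|$ for the normalised expected rank of $\varphi$ under a distribution $q$, so that $\mathrm{rloss}_p(\varphi^*,\breve\varphi) = F_p(\varphi^*) - F_p(\breve\varphi)$. The first step is the sandwich
\[ F_p(\varphi^*) - F_p(\breve\varphi) = \underbrace{[F_p(\varphi^*) - F_{\probS}(\varphi^*)]}_{(I)} + \underbrace{[F_{\probS}(\varphi^*) - F_{\probS}(\breve\varphi)]}_{(II)} + \underbrace{[F_{\probS}(\breve\varphi) - F_p(\breve\varphi)]}_{(III)}. \]
Term $(II)\le 0$ because $\varphi^*$ minimises the empirical mean rank and $\breve\varphi$ lies in the class; term $(III)$ is a deviation for the \emph{fixed} tree $\breve\varphi$; and $(I)$ is the same deviation, but for the data-dependent tree $\varphi^*$.

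The heart of the argument is a deviation bound that holds \emph{uniformly} over the whole class. Conditioned on the event $G$ that every single-attribute marginal is estimated within $\gamma$, that is $|p(x)-\probS(x)|\le\gamma$ for every attribute $X$ and every $x\in\dom X$, I will show $|F_p(\varphi)-F_{\probS}(\varphi)|\le \gamma d^2/2$ for every $\varphi\in\mathrm{LPT}^1_{\mathrm{lin}}$. Using the decomposition~\eqref{eq:rank-decomp}, $F_p(\varphi)-F_{\probS}(\varphi)$ is a sum over the $n$ nodes of $(E_p[r(X_k,>,\cdot)]-E_{\probS}[r(X_k,>,\cdot)])/P_k$, where $P_k=\prod_{l\le k}|\dom{X_l}|$ is the product of the domain sizes down to level $k$. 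At each node the numerator equals $\sum_x (p(x)-\probS(x))\,r(X,>,x)$; subtracting the mean rank $(|\dom X|+1)/2$ (legitimate since the marginals each sum to $1$) and using $|r-\text{mean}|\le(|\dom X|-1)/2$ bounds it by $\gamma d^2/2$, independently of the local order. The crucial point is the geometric decay $\sum_k 1/P_k \le \sum_k 2^{-k}\le 1$ coming from $|\dom{X_l}|\ge 2$: it collapses the contributions of the (up to $n$) levels into an $O(1)$ factor, so the per-level errors do not accumulate with $n$. Because this estimate depends on the sample only through the marginal estimates and not on the tree itself, it applies simultaneously to $\varphi^*$ and $\breve\varphi$, which disposes of the data-dependence in $(I)$. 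Hence on $G$ both $(I)$ and $(III)$ are at most $\gamma d^2/2$, and $\mathrm{rloss}_p(\varphi^*,\breve\varphi)\le\gamma d^2$.

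It remains to make $G$ likely. Each $\probS(x)$ is an average of $|\mathcal S|$ i.i.d.\ Bernoulli$(p(x))$ indicators, so Hoeffding gives $\Pr(|p(x)-\probS(x)|>\gamma)\le 2e^{-2|\mathcal S|\gamma^2}$; a union bound over the at most $(n+1)d$ attribute--value pairs yields $\Pr(\neg G)\le 2(n+1)d\,e^{-2|\mathcal S|\gamma^2}$, which is at most $\delta$ as soon as $|\mathcal S|\ge \frac{1}{2\gamma^2}(\ln 2+\ln d+\ln(n+1)+\ln\frac1\delta)$. Finally I choose $\gamma$ so that the loss bound $\gamma d^2$ equals the target accuracy $\epsilon$, i.e.\ $\gamma=\epsilon/d^2$, whence $1/\gamma^2=d^4/\epsilon^2$; substituting shows that $|\mathcal S|=O\big(\frac{d^4}{\epsilon^2}(\ln\frac1\delta+\ln d+\ln(n+1))\big)$ examples suffice to guarantee $\mathrm{rloss}_p(\varphi^*,\breve\varphi)\le\epsilon$ with probability at least $1-\delta$, which is the claimed sample complexity.

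I expect the main obstacle to be the uniform deviation step. One must resist bounding $(I)$ by a Hoeffding bound on $\mathrm{rank}(\varphi^*,\cdot)$ directly, since that variable ranges up to $|\domX|=d^n$ and would produce a bound exponential in $n$. The payoff comes precisely from working with the normalised rank and exploiting the geometric decay of the per-level weights $1/P_k$: this both tames the range and, by making the deviation depend on the data only through the $O(nd)$ marginal estimates uniformly in $\varphi$, legitimises applying the same bound to the data-dependent minimiser $\varphi^*$. The remaining ingredients -- the per-node centering estimate and the Hoeffding/union-bound calculation -- are routine.
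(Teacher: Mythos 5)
Your proof is correct and is essentially the paper's own argument: the paper derives this proposition as the special case $k=l=1$ of its general bound for $\mathrm{LPT}^k_l$, and the proof of that general result uses exactly your ingredients --- the sandwich via empirical minimality of $\varphi^*$, the node-wise rank decomposition with the geometric decay $\sum_k 1/P_k \le 1$ coming from domain sizes $\ge 2$, and Hoeffding plus a union bound over the attribute--value marginals. Your only (harmless) refinement is the centering of the local rank at its mean, which gives $d(d-1)/2$ in place of the paper's cruder $d(d+1)/2$ per-node factor; both yield the same $O(d^4/\epsilon^2)$ dependence.
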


\begin{proof}
 This is simple consequence of the more general Proposition~\ref{prop:cas-general:sample-complex} given in section~\ref{sect:cas-general} for the sample complexity of learning LP-trees with bounded number of leaves and bounded number of attributes at every node: here both bounds equal 1.
\end{proof}

\section{Linear lexicographic preference trees with multivariate nodes}


We consider now the more general settings of the class $ §LPT§_{§lin§}^k $, for some fixed $ k ≤ n $: linear LP-trees (with a single leaf), where each node can be labelled with a non-empty set of no more than $ k $ attributes.

The score defined in the case of univariate nodes in the previous section can easily be extended to multivariate nodes: for $ V ⊆ \cal X  $ and probability distribution $ p $ over $ \domX$:
$$
  §Score§(p,V) = E^∗_p[V] \mathrel / (\card{\dom V} - 1)
$$
Proposition~\ref{prop:score-lin-univ} can be generalised in this settings as follows:

\begin{proposition} \label{prop:score-lin-multiv}
Let $ p $ be a probability distribution over $ \domX$, let $ \cal P $ be a partition of $ \cal X $. The LP-tree that has minimal expected rank, within the class of LP-trees with a single leaf, and where each node is labelled with an element of $ \cal P $, is locally optimal and has the set of attributes that label its nodes in order of non-increasing values for $ §Score§(p,⋅) $ along its single branch from the root down to its single leaf.
\end{proposition}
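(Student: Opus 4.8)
The plan is to reduce the statement to the exchange (bubble-sort) argument already established for Proposition~\ref{prop:score-lin-univ}, with single attributes replaced by the blocks of the partition $\cal P$. First note that, since $\cal P$ partitions $\cal X$ and every attribute must occur exactly once on the unique branch, any tree in the class is a single chain whose nodes are labelled by the blocks of $\cal P$, each used exactly once; the only freedom is therefore the order of the blocks along the branch together with the local order $>_N$ chosen at each node.

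First I would dispose of the local orders. Because the tree has a single branch, $§Inst§(N) = \emptyset$ at every node, so $§inst§(N)$ is the empty instantiation and $p|§inst§(N) = p$; Proposition~\ref{prop:rank-decomp} then specialises to
\begin{align*}
  E_p[§rank§(φ,⋅)] = 1 + \sum_{N ∈ §nodes§(φ)} \card{\dom{§Desc§(N)}} × E_p[r(§Var§(N),>_N,⋅) - 1].
\end{align*}
Here the coefficients $\card{\dom{§Desc§(N)}}$ are non-negative and independent of the local orders, while the term attached to a node $N$ depends only on the marginal of $p$ on the block $§Var§(N)$ and on $>_N$; since distinct nodes carry pairwise disjoint blocks, these terms can be minimised independently. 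Thus, for any fixed order of the blocks, the sum is minimised by taking at each node the order $>_N$ that minimises $E_p[r(§Var§(N),>_N,⋅)]$, which by a rearrangement argument is the order ranking the instantiations of $§Var§(N)$ by non-increasing probability. As this minimising choice does not depend on the block order, some minimiser is locally optimal, and I may substitute $E_p[r(§Var§(N),>_N,⋅)] = E^∗_p(§Var§(N))$, fully decoupling the local orders from the block order.

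Next I would run the exchange argument on the block order. Fixing a locally optimal tree, take two adjacent nodes $N,N'$ with $N'$ the child of $N$, labelled by blocks $V$ and $W$, and form $φ'$ by swapping them. Only the contributions at $N$ and $N'$ change, and using $§Desc§(N) = §Var§(N') ∪ §Desc§(N')$ one obtains, exactly as in the univariate computation,
\begin{align*}
  &E_p[§rank§(φ,⋅)] - E_p[§rank§(φ',⋅)]
  \\&\qquad = \card{\dom{§Desc§(N')}}\bigl( E^∗_p(V)(\card{\dom W} - 1) - E^∗_p(W)(\card{\dom V} - 1) \bigr),
\end{align*}
whose sign is that of $§Score§(p,V) - §Score§(p,W)$. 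Hence a swap strictly lowers the expected rank precisely when the parent block has the larger score, so any block order containing such an inversion can be strictly improved by a single adjacent transposition; iterating these improving swaps (a bubble sort) terminates at the order sorting the blocks by $§Score§(p,⋅)$ along the branch, exactly as in Proposition~\ref{prop:score-lin-univ}. Together with the previous step this yields the claimed optimal tree.

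The step needing the most care is the decoupling: one must observe that a single-branch tree has no instantiated ancestors (so each node is scored against the full marginal $p$, not a conditional), and that the per-node contributions involve disjoint blocks, which is what makes the local minimisation independent of both the other nodes and the block order. Once $E^∗_p$ can be substituted, the exchange computation is formally identical to the univariate one (with $\card{\dom V}$ in place of $\card{\dom X}$); the only remaining point is the standard fact that, for an ordering governed by a consistent ratio criterion, absence of adjacent inversions implies global optimality, so the bubble sort indeed reaches the minimiser.
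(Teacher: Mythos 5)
Correct, and essentially the paper's own argument: the paper's proof is a one-line reduction that treats each part $P \in {\cal P}$ as a single super-attribute with domain $\dom P$ and invokes Proposition~\ref{prop:score-lin-univ}, while you re-derive that proposition's two ingredients (decoupling and independent optimisation of the local orders, then the adjacent-swap/bubble-sort exchange) with blocks in place of attributes --- the same mathematics, unrolled. One remark: your conclusion (like the paper's reduction, Proposition~\ref{prop:score-lin-univ} itself, and step 2.a of Algorithm~\ref{algo:opt-tree-lin-multiv}) puts the blocks in \emph{non-decreasing} score order from root to leaf, so the word ``non-increasing'' in the statement is evidently a typo in the paper rather than a defect of your proof.
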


\begin{proof}
Given partition $ \cal P $ of $ \cal X $, consider each part $ P ∈ \cal P $ as a new attribute with domain $ \dom P $, Prop.~\ref{prop:score-lin-univ} gives the result.
\end{proof}

Based on this, algorithm~\ref{algo:opt-tree-lin-multiv} computes the tree in $ §LPT§_{§lin§}^k $ that minimises the empirical mean rank with respect to some sample $ \cal S $: it enumerates the \emph{$k$-partitions} of $ \cal X $, i.e. the partitions such that no part has more than $ k $ attributes; and, for every partition, orders the parts in order of non-increasing score and compute the empirical mean rank of the tree thus obtained.

\begin{algorithm}
\caption{\label{algo:opt-tree-lin-multiv}Optimal linear multivariate LP-tree}
\begin{steps}
\noitem Input : sample $ \cal S  $;
\item $ r^∗ ← +∞ $;
\item for every $k$-partition $ \cal P  $ of $ \cal X  $ do: \label{step:order-parts-bis}
	\begin{steps}
	\item \label{step:order-parts} order the parts in $ \cal P $ in order of non-decreasing $ §Score§(\probS,⋅) $;
	\item $ φ ← $ the optimal linear univariate LP-tree over set of attributes $ \cal P  $;
	\item if $ \erank(φ,\cal S ) < r^∗ $: $ r^∗ ← \erank(φ,\cal S ) $ and $ φ^∗ ← φ $;
	\end{steps}
\item return $ φ^∗ $.
\end{steps}
\end{algorithm}

\begin{proposition}
For fixed $ k $, the time complexity of Algorithm~\ref{algo:opt-tree-lin-multiv} is in $ Ω(k^{n}n(\log n +\card{\cal S} d^k\log d)) $ and in $ O(k^{n+1}n(\log n +\card{\cal S} d^k\log d) $.
\end{proposition}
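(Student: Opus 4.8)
The plan is to express the total running time as the number of iterations of the main loop times the cost of a single iteration, the loop body being executed exactly once per $k$-partition of $\cal X$, and then to estimate the two factors separately.

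I would first bound the cost of one iteration, for a fixed $k$-partition $\cal P$ with parts $P_1,\dots,P_m$ (so $m\le n$ and $\card{P_i}\le k$). The dominant work is computing $§Score§(\probS,P)$ for every part. For a single part $P$, one pass over $\cal S$ tallies the multiplicity of each of the at most $\card{\dom P}\le d^k$ instantiations of $P$ (reading $o[P]$ for each $o\in\cal S$, in $O(\card{\cal S})$); sorting these instantiations by non-increasing empirical probability then yields at once the locally optimal order $>_P$ and the value $E^*_{\probS}(P)$ entering the score, at cost $O(d^k\log d^k)=O(d^k\log d)$ since $k$ is fixed. Over the $m\le n$ parts this is $O(n(\card{\cal S}+d^k\log d))$; ordering the parts by score (step~\ref{step:order-parts}) and assembling the branch costs a further $O(n\log n)$. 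Finally, as $\varphi$ is linear, every node has a single child, so $§inst§(N)$ is empty and $\probS|§inst§(N)=\probS$; Proposition~\ref{prop:rank-decomp} then reduces the mean rank to
$$\erank(\varphi,\cal S)=1+\sum_{N\in§nodes§(\varphi)}\card{\dom{§Desc§(N)}}\times\big(E^*_{\probS}(§Var§(N))-1\big),$$
which is evaluated in $O(n)$ from the already-computed values $E^*_{\probS}(P)$ by one traversal of the branch, maintaining the products $\card{\dom{§Desc§(N)}}$ incrementally. Thus the true per-iteration cost is $\Theta(n(\log n+\card{\cal S}+d^k\log d))$, which, merging $\card{\cal S}+d^k\log d$ into the product $\card{\cal S}\,d^k\log d$ via $a+b=O(ab)$, is written in the looser form $n(\log n+\card{\cal S}\,d^k\log d)$ used in the statement.

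Second, I would count the iterations, i.e. the number $a_n$ of $k$-partitions of an $n$-element set; this is the combinatorial heart of the argument and the step I expect to be the main obstacle. The natural tool is the recurrence obtained by conditioning on the block containing a fixed element, $a_n=\sum_{j=1}^{k}\binom{n-1}{j-1}a_{n-j}$ with $a_0=1$ (equivalently, the exponential generating function $\exp\big(\sum_{j=1}^{k}x^j/j!\big)$). I would analyse this recurrence to sandwich $a_n$ between $c\,k^n$ and $C\,k^{n+1}$ for constants depending only on $k$, the lower bound by exhibiting enough distinct $k$-partitions and the upper bound by bounding the recurrence; pinning down the two bounds in exactly the claimed form — with the upper bound the more demanding of the two — is the delicate point, everything else being routine accounting resting on Propositions~\ref{prop:score-lin-multiv} and~\ref{prop:rank-decomp}.

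Finally I would multiply the two factors to obtain the stated $\Omega(k^n\,n(\log n+\card{\cal S}\,d^k\log d))$ and $O(k^{n+1}\,n(\log n+\card{\cal S}\,d^k\log d))$: the lower bound because the loop runs $\Omega(k^n)$ times and each iteration must score every part and evaluate $\erank(\varphi,\cal S)$, the upper bound by combining the per-iteration estimate with the $O(k^{n+1})$ bound on the number of $k$-partitions. The only genuinely non-routine ingredient is the combinatorial estimate on $a_n$.
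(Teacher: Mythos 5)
Your decomposition of the running time (number of iterations times per-iteration cost) is the same as the paper's, and your per-iteration accounting is fine --- in fact tighter than the paper's, which charges $\Theta(n\card{\cal S}d^k\log(d^k))$ for tallying and sorting the occurrence counts rather than your $O(n(\card{\cal S}+d^k\log d))$. The genuine gap is the step you defer as ``delicate'': it is not delicate, it is impossible. Your recurrence $a_n=\sum_{j=1}^{k}\binom{n-1}{j-1}a_{n-j}$, equivalently the exponential generating function $\exp\big(\sum_{j=1}^{k}x^j/j!\big)$, is indeed the correct count of the partitions of an $n$-set into blocks of size at most $k$ --- which is exactly what Algorithm~\ref{algo:opt-tree-lin-multiv} enumerates --- but that quantity grows super-exponentially in $n$ for any fixed $k\ge 2$: already for $k=2$ it is the number of involutions, $a_n\sim\frac{n^{n/2}}{\sqrt 2}\,e^{-n/2+\sqrt{n}-1/4}$, which is not $O(C^n)$ for any constant $C$. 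Hence no sandwich $c\,k^n\le a_n\le C\,k^{n+1}$ exists, and your plan cannot be completed as stated. The paper proceeds differently at precisely this point: it asserts that the number of $k$-partitions equals the number of partitions into \emph{at most $k$ non-empty parts} (by analogy with conjugation of integer partitions) and then applies the Stirling-number asymptotic $S(n,i)\sim i^n/i!$ to bound the number of iterations between $k^n/k!$ and $k\cdot k^n/k!$. Be aware that this identification is particular to integer partitions and fails for set partitions (for $n=4$, $k=2$ there are ten partitions with blocks of size at most two, but only eight partitions into at most two blocks), so the tension that your correct generating function exposes is real; but in any case your proposal, as written, hinges on an estimate that is false, and no amount of work on the recurrence will produce it.

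A second, smaller defect: the merge $a+b=O(ab)$ goes the right way only for the upper bound. The claimed $\Omega\big(k^{n}n(\log n+\card{\cal S}d^k\log d)\big)$ requires every iteration to cost at least $c\,n(\log n+\card{\cal S}\,d^k\log d)$; under your one-pass-per-part implementation an iteration costs only $\Theta(n(\log n+\card{\cal S}+d^k\log d))$, which can be asymptotically smaller, so for your implementation the stated lower bound would simply be false. That lower bound is tied to the paper's cruder accounting, in which computing and ordering the occurrence counts is charged $\Theta(n\card{\cal S}d^k\log (d^k))$ per iteration; if you keep your faster implementation you must either weaken the $\Omega$ claim accordingly or revert to the naive tallying.
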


\begin{proof}
The number of $k$-partitions of $ \cal X $ is also the number of partitions of $ \cal X $ into at most $ k $ non empty parts (in the same way as the number of partitions of integer $  n $ into positive integers no greater than $ k $ is equal to the number of partitions of $ n $ into $ k $ non-negative integers). Let $ S(n,k) $  denote the number of partitions of a set of $ n $ elements into exactly $ k $ non-empty parts, it is known as the \emph{Stirling number of the second kind}, an asymptotic approximation is $ S(n,k) ∼ k^n / k! $ \citep[e.g. ][]{Knuth:-the-art-I}. Therefore, for fixed $ k $, the number of $ k $-partitions of $ \cal X $ is asymptotically equivalent to $ \sum_{i=1}^k i^n / i! $, which is $ ≥ k^n / k! $.
We also have that $ i^n / i! < j^n / j! $ for $ i < j $, thus $ \sum_{i=1}^k i^n / i! < k × k^n / k!  $. So the number of $k$-partitions is in $ Θ(k^n) $.

For a given partition, at step~\ref{step:order-parts-bis}\ref{step:order-parts}, one has to compute the scores of the parts, and order them. There are no more than $ n $ parts; each part has no more than $ d^k $ possible instantiations, so computing and ordering the numbers of occurrences of for all instantiations takes worse time in $ Θ(n\card{\cal S} d^k\log (d^k)) $. Ordering the parts then takes time in $ Θ(n  \log n) $.
\end{proof}

\section{Lexicographic preference trees with bounded number of leaves}
\label{sect:cas-general}

We now turn to the class of LP-trees with a bounded number of attributes at each node, and a bounded number of leaves. Specifically, $ §LPT§^k_l $ denotes the set of LP-trees that have no more than $ k $ attributes at each node, and no more than $ l $ leaves. (In particular, when $ l = 1 $, $ §LPT§^k_1 = §LPT§^k_{§lin§} $

\begin{proposition}\label{prop:cas-general:sample-complex}
$ S(§LPT§^k_l,n,δ,ε) = O(\frac1{ε²}(\ln\frac1{δ}+k(\ln d + \ln(n+1)))l²d^{4k}) $.
\end{proposition}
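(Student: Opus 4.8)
The plan is to treat $φ^∗$ as the output of empirical risk minimisation and to reduce the claim to a uniform convergence statement, which I then establish not by a union bound over the (far too numerous) trees of $§LPT§^k_l$ but by a union bound over a small family of event frequencies, exploiting the additive decomposition of Proposition~\ref{prop:rank-decomp}. Concretely, write $\bar r(φ)=E_p[§rank§(φ,⋅)]/\card{\domX}$ and $\hat r(φ)=\erank(φ,\cal S)/\card{\domX}$. Since $§rloss§_p(φ,̆φ)=\bar r(φ)-\bar r(̆φ)$ with $\bar r(̆φ)$ constant, and since under the realisability assumption the target $̆φ$ belongs to $§LPT§^k_l$ and is thus feasible for the minimisation defining $φ^∗$, we have $\hat r(φ^∗)\le\hat r(̆φ)$, so the standard three-term argument (middle term non-positive) gives
$$ §rloss§_p(φ^∗,̆φ)\;\le\;2\sup_{φ ∈ §LPT§^k_l}\big|\bar r(φ)-\hat r(φ)\big|. $$
It therefore suffices to force the supremum below $ε/2$ with probability at least $1-δ$.

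The key observation is that, for a fixed tree $φ$, both $\bar r(φ)$ and $\hat r(φ)$ are instances of the decomposition of Proposition~\ref{prop:rank-decomp} with the \emph{same} local orders $>_N$ — only the distribution changes from $p$ to $\probS$. Writing $r_N(v)=r(§Var§(N),>_N,v)$ and subtracting,
$$ \bar r(φ)-\hat r(φ)=\sum_{N ∈ §nodes§(φ)}\frac{\card{\dom{§Desc§(N)}}}{\card{\domX}}\sum_{v ∈ \dom{§Var§(N)}}\big(p(§inst§(N)∧v)-\probS(§inst§(N)∧v)\big)\,(r_N(v)-1). $$
Thus the only empirical quantities that matter are the joint probabilities $p(§inst§(N)∧v)$ of ``the instantiated ancestors read $§inst§(N)$ and $§Var§(N)$ reads $v$'', ranging over all nodes that can occur in a tree of $§LPT§^k_l$. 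Each is a Bernoulli event; a node carries $\le k$ attributes ($\le(n+1)^k$ labels) with $\le d^k$ values, and the number of distinct contexts $§inst§(N)$ is controlled by the leaf bound, so the number $M$ of distinct events satisfies $\ln M=O(k(\ln(n+1)+\ln d))$. Hoeffding's inequality with a union bound then guarantees that all $M$ empirical frequencies lie within $η$ of their true values with probability $\ge1-δ$ as soon as $\card{\cal S}=O(η^{-2}(\ln M+\ln\tfrac1δ))$.

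Finally I would propagate $η$ into the supremum. On the event that every relevant frequency is within $η$, the displayed identity yields $|\bar r(φ)-\hat r(φ)|\le η\sum_N\tfrac{\card{\dom{§Desc§(N)}}}{\card{\domX}}\sum_v(r_N(v)-1)$. Now $\sum_v(r_N(v)-1)=\binom{\card{\dom{§Var§(N)}}}{2}\le d^{2k}/2$, while $\card{\dom{§Desc§(N)}}/\card{\domX}=1/(\card{\dom{§Anc§(N)}}\,\card{\dom{§Var§(N)}})$; since $\card{\dom{§Anc§(N)}}\ge2^{\card{§Anc§(N)}}$ grows geometrically with depth, the latter quantity summed along any single branch is a bounded geometric series, so $\sum_N\card{\dom{§Desc§(N)}}/\card{\domX}=O(l)$. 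Hence $\sup_φ|\bar r(φ)-\hat r(φ)|=O(η\,l\,d^{2k})$. Choosing $η=\Theta(ε/(l\,d^{2k}))$ makes this at most $ε/2$, and substituting $η^{-2}=\Theta(l^2d^{4k}/ε^2)$ into the Hoeffding bound gives $\card{\cal S}=O(\tfrac1{ε^2}(\ln\tfrac1δ+k(\ln d+\ln(n+1)))\,l^2d^{4k})$, as claimed.

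The main obstacle is precisely the middle step: establishing uniform convergence while avoiding the prohibitive $\ln((d^k)!)$ cost of enumerating local orders. This succeeds because $\bar r$ and $\hat r$ of the same tree share the same local orders, so their difference is a linear functional of the underlying frequencies, and those frequencies simultaneously control every tree in the class. The more delicate piece of bookkeeping is bounding the number of distinct context events $§inst§(N)$: one must argue, through the leaf bound, that they contribute only to the $l$ factors and keep the logarithmic term at $O(k(\ln(n+1)+\ln d))$ rather than inflating it by a $\log l$ factor.
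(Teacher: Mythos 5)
Your proposal takes exactly the paper's route: the same realisability-based three-term argument reducing the claim to uniform convergence over $§LPT§^k_l$, the same use of Proposition~\ref{prop:rank-decomp} to write the gap between true and empirical normalised expected rank as a linear functional of the frequency deviations $p(§inst§(N)\wedge v)-\probS(§inst§(N)\wedge v)$, the same two coefficient bounds (per node, $\sum_{v\in\dom{§Var§(N)}}(r(§Var§(N),>_N,v)-1)\leq d^{2k}/2$; over nodes, $\sum_N \card{\dom{§Desc§(N)}}/\card{\domX}\leq l$ by the geometric series along each of the at most $l$ branches), and the same Hoeffding-plus-union-bound finish with deviation threshold $\eta=\Theta(\varepsilon/(l\,d^{2k}))$. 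All of these steps are correct and coincide with the paper's proof.

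The gap is precisely the step you postpone as ``delicate bookkeeping'': the claim that the number $M$ of events whose frequencies must be controlled satisfies $\ln M=O(k(\ln(n+1)+\ln d))$. This holds in the linear case $l=1$, where $§Inst§(N)=\emptyset$ and every event instantiates at most $k$ attributes. For $l\geq 2$ it fails: the event attached to node $N$ instantiates $§Var§(N)\cup §Inst§(N)$, and $§Inst§(N)$ need not be small --- in a caterpillar-shaped tree whose branching nodes each carry a single binary attribute, $b$ branching nodes along a branch cost only $b$ extra leaves, so a tree of $§LPT§^k_l$ can contain a node with $\card{§Inst§(N)}=l-1$. Uniform convergence must hold simultaneously for every tree of the class, so the union bound has to range over all instantiations of up to $k+l-1$ attributes, which gives $\ln M=O((k+l)(\ln(n+1)+\ln d))$: the dependence on $l$ lands \emph{inside} the logarithmic term, and nothing in your sketch (in particular, no consequence of the leaf bound) pushes it out ``into the $l$ factors'' as you assert; what your argument actually establishes is the proposition with $k+l$ in place of $k$. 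For what it is worth, the paper's own proof glosses over the same point --- it takes the maximum of $\abs{p(v)-\probS(v)}$ over \emph{all} $V\subseteq {\cal X}$ and $v\in\dom V$, but then counts only the subsets of cardinality at most $k$ in the union bound, a count that matches the events actually used only when $l=1$ --- so your proposal faithfully reproduces the paper's argument including its one unjustified step; but since you explicitly claim the count stays at $O(k(\ln(n+1)+\ln d))$ and give no argument for it, this remains a genuine gap in your proof as written.
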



\begin{proof}
Recall that $ ̆φ $ is an LP-tree, supposed to be in $ §LPT§_l^k $, that represents the target, unknown preference relation $ ̆≻ $; $ p $ is a probability distribution that is non-increasing w.r.t. $ §rank§(̆φ,⋅) $; and
$ \phi^* $ denotes the LP-tree in $ §LPT§_l^k $ that has minimal normalised empirical mean rank with respect to some sample $ \cal S  $. Then:
\begin{align*}
 & §rloss§(φ^∗,̆φ)  = \frac1{\card{\domX}}\big(E_p[§rank§(φ^∗,⋅)] - E_p[§rank§(̆φ,⋅)] \big)
\\&\qquad≤\frac1{\card{\domX}} \big(\abs{E_p[§rank§(φ^∗,⋅)] - E_{\probS}[§rank§(φ^∗,⋅)]}
	\\&\qquad\qquad+ E_{\probS}[§rank§(φ^∗,⋅)] - E_{\probS}[§rank§(̆φ,⋅)] 
	\\&\qquad\qquad+ \abs{E_{\probS}[§rank§(̆φ,⋅)] - E_p[§rank§(̆φ,⋅)]}  \big)
\\&\qquad≤ \frac2{\card{\domX}} \max_{φ∈§LPT§_l^k} \abs{E_p[§rank§(φ,⋅)] - E_{\probS}[§rank§(φ,⋅)]}
\end{align*}
because, by definition of $ φ^∗ $:
$$ E_{\probS}[§rank§(φ^∗,⋅)] - E_{\probS}[§rank§(̆φ,⋅)] ≤ 0 .$$

Now, for $ φ ∈ §LPT§_l^k $ :
\begin{align*}
&\abs{E_p[§rank§(φ,⋅)] - E_{\probS}[§rank§(φ,⋅)]}
\\& \qquad≤
  \qquad\sum_{\mathclap{N ∈ §nodes§(φ)}}
     \bigg(\card{\dom{§Desc§(N)}} × p(§inst§(N))
     \\&\qquad\qquad× \vert E_{p|§inst§(N)}[r(§Var§(N),>_N,⋅)] 
     \\&\qquad\qquad\qquad- E_{\probS|§inst§(N)}[r(§Var§(N),>_N,⋅)]\vert\bigg)
\\&≤ \sum_{{N ∈ §nodes§(φ)}} \bigg(
     \card{\dom{§Desc§(N)}} × \sum_{\mathclap{v∈\dom{§Var§(N)}}} \big(r(v)
     \\&\qquad \qquad \qquad \qquad× \absv{p(v\!∧\!§inst§(N)) \!-\! \probS(v\!∧\!§inst§(N))} \big)\bigg)
\\&≤ M×\frac{d^k(d^k+1)}2 × \sum_{\mathclap{N ∈ §nodes§(φ)}} \card{\dom{§Desc§(N)}} 
\end{align*}
where $ d $ is a bound on the domain size of the attributes, and $ M $ is an upper bound on $ \absv{p(v∧§inst§(N)) - \probS(v∧§inst§(N))} $ for every $ N ∈ §nodes§(φ) $, every $ v ∈ \dom{§Var§(N)} $.

Now consider a single branch $ ψ $ of $ φ $, let $ V₁, V₂, …, V_t $ be the set of attributes that label its $ t $ nodes from the root down to its leaf (at unknown depth $ t $), then:
{\compactmath
\begin{align*}
  &\qquad\sum_{\mathclap{N ∈ §nodes§(ψ)}} \card{\dom{§Desc§(N)}}
  \\&= \big( \card{\dom V₂× \dom V₃× … \dom V_t } 
  	+ … + \card{\dom V_{t-1}×\dom V_t } +  \card{\dom V_t } + 0 \big)
\\&= \card{\domX} × \big( \frac1{\card{\dom V₁}} + \frac1{\card{\dom V₁ ×\dom V₂}} 
	+…+  \frac1 {\card{\dom V₁ ×\dom V₂×…×\dom V_{t-1}}} \big)
\\&≤ \card{\domX}
\end{align*}
because for every node, $ \card{\dom{§Var§(N)}}  ≥ 2$. Therefore, since we  consider LP-trees with a bounded number of leaves (and branches), for every $ φ ∈ §LPT§_l^k $:
$$
\abs{E_p[§rank§(φ,⋅)] - E_{\probS}[§rank§(φ,⋅)]} ≤ M×\frac{d^k(d^k+1)}2 × l× \card{\domX}
$$
and
$
  §rloss§_p(φ^∗,̆φ)  
≤  l×d^k(d^k+1) × \max\limits_{\substack{V⊆\cal X ,v∈\dom V}} \abs{p(v) - \probS(v)}
$.
}

Thus, if $ §rloss§_p(φ^∗,̆φ) ≥ ε $, there must be some $ V ⊆ \cal X  $ and $ v ∈ \dom V $ such that
$
  \abs{p(v) - \probS(v)} ≥ ε / (l×d^k(d^k+1))
$,
which implies that:
\begin{align*}
  &Pr(§rloss§(φ^∗,̆φ) ≥ ε) \\& ≤ Pr\big(⋃_{\substack{V ⊆ \cal X \\v\in \dom V}}\abs{p(v) - \probS(v)} ≥ ε / (l×d^k(d^k+1))\big)
\\& ≤ \sum_{\substack{V ⊆ \cal X \\v ∈ \dom V}} Pr\big(\abs{p(v) - \probS(v)} ≥ ε / (l×d^k(d^k+1))\big)
\end{align*}

For every $ V ⊆ \cal X  $ and every $ v ∈ \dom V $, $ \probS(v) $ is an estimate, from sample $ \cal S  
$, of the ground probability $ p(v) $ of drawing an alternative $ o $ such that $ o[V]= v $. Hoeffding's inequality states that for every $ α > 0 $:
$$
	Pr(\abs{p(v) - \probS(v)} ≥ α) ≤ e^{-2\card{\cal S }α²}
$$

For every $i \in \{1,…,k\}$, there are $n \choose i$ ways of choosing a subset $ V $ of $ \cal X  $ of cardinality $ i $, then $ \card{\dom V} ≤ d^i $; therefore:

\begin{align*}
  & Pr(§rloss§(φ^∗,̆φ) ≥ ε)
  \\& ≤ \bigg(\sum_{i=1}^k {n \choose i} d^i\bigg) \exp(-2\card{\cal S }(ε / (l×d^k(d^k+1)))²)
\\& ≤  d^k(1+n)^k  \exp(-2\card{\cal S }(ε / (l×d^k(d^k+1)))²)
\end{align*}

Therefore, in order to have $ Pr(§rloss§(φ^∗,̆φ) ≤ ε) ≥ 1 - δ  $, it is sufficient to have $  1-d^k(1+n)^k  \exp(-2\card{\cal S }(ε / (l×d^k(d^k+1)))²) ≥ 1 - δ$, which is equivalent to:

$
\card{\cal S } ≥ \big(k(\ln d + \ln(n+1))) + \ln\displaystyle\frac1{δ}\big)\frac{(ld^k(d^k+1))²}{2ε²}
$
\end{proof}

\section{Conclusion}

The bound on the sample complexity given in the last section shows several interesting properties of the problem of unsupervised learning LP-trees from sales history. First, it is logarithmic in the number of attributes. Also the factor $ d^{4k} $ may, in general, be overly pessimistic, as it assumes that all attributes have the same domain size. A finer analysis would show that what counts is the largest domain size of any combination of $ k $ attributes. Note that $ k $ should be kept small: \cite{Brauningetal:ejor17,FargierGimenezMengin:aaai18} report promising results on real datasets with $ k < 4 $; larger values of $ k $ would improve the expressiveness, but also greatly increase the model size and may lead to overfitting.

Concerning the time complexity, we conjecture that the problem of computing the LP-tree that has minimal empirical risk may be NP-hard in general, but that is an important question for future work. Another avenue for future investigation is a finer analysis of the stochastic process that leads to the probability distribution $ p $.

\section*{Acknowledgements}
We thank the reviewers for their interesting and helpful comments.
The authors gratefully acknowledge the support of the Artificial and Natural Intelligence Toulouse Institute -- ANITI. ANITI is funded by the French ”Investing for the Future -- PIA3” program under grant agreement ANR-19-PI3A-0004.


\end{document}